\long\def\comment#1{}
\newfont{\bbb}{msbm10 scaled 700}
\newfont{\bb}{msbm10 scaled 1100}
\newcommand{\Gc}{{\cal G}}
\newtheorem{theorem}{Theorem}
\newtheorem{proposition}{Proposition}
\newtheorem{definition}{Definition}
\newtheorem*{question*}{Question}
\DeclareMathOperator*{\argmax}{arg\,max}
\theoremstyle{remark}
\newtheorem*{remark}{Remark}
\begin{document}

\title{Active Learning for Community Detection in Stochastic Block Models}

\author{
\IEEEauthorblockN{
Akshay Gadde, 
Eyal En Gad, 
Salman Avestimehr and 
Antonio Ortega
}
\IEEEauthorblockA{
University of Southern California, Los Angeles\\
Email: \{agadde,engad\}@usc.edu, avestimehr@ee.usc.edu, antonio.ortega@sipi.usc.edu 
}
}

\maketitle

\begin{abstract}
The stochastic block model~(SBM) is an important generative model for random graphs in network science and machine learning, useful for benchmarking community detection (or clustering) algorithms. The symmetric SBM generates a graph with $2n$ nodes which cluster into two equally sized communities. Nodes connect with probability $p$ within a community and $q$ across different communities. We consider the case of $p=a\ln (n)/n$ and $q=b\ln (n)/n$. In this case, it was recently shown that recovering the community membership (or label) of every node with high probability (w.h.p.) using only the graph is possible if and only if the Chernoff-Hellinger (CH) divergence $D(a,b)=(\sqrt{a}-\sqrt{b})^2 \geq 1$. In this work, we study if, and by how much, community detection below the clustering threshold (i.e. $D(a,b)<1$) is possible by querying the labels of a limited number of chosen nodes (i.e., active learning). Our main result is to show that, under certain conditions, sampling the labels of a vanishingly small fraction of nodes (a number sub-linear in $n$) is sufficient for exact community detection even when $D(a,b)<1$. Furthermore, we provide an efficient learning algorithm which recovers the community memberships of all nodes w.h.p. as long as the number of sampled points meets the sufficient condition. We also show that recovery is not possible if the number of observed labels is less than $n^{1-D(a,b)}$. The validity of our results is demonstrated through numerical experiments.

\end{abstract}

\section{Introduction}

Community detection (or clustering) is a fundamental problem in the study of networks~\cite{Fortunato-PR-10}. 
In this problem, it is assumed that each node (or vertex) of a network belongs to one out of a few latent groups or communities, and that the topology of the network depends on these \emph{a priori} unknown group memberships (or labels).
The goal is to recover the communities by partitioning the nodes into different classes, with denser connections within classes and sparser connections between classes.
This problem arises in many areas,  such as  detecting protein  complexes in protein interaction networks~\cite{Chen-BIO-06} and image classification and segmentation~\cite{Shi-PAMI-00}.

In community detection problems, stochastic block models (SBM)~\cite{Holland-SN-83} are random graph models that are commonly used for benchmarking the performance of different clustering algorithms and deriving fundamental limitations of graph clustering. In this model, an edge exists between a pair of nodes independently with a probability that depends on the community membership of the two nodes. The simplest example of an SBM, which is the one considered in this paper, contains two communities of equal sizes, such that a pair of nodes from the same community are connected with probability $p$, and nodes of different communities are connected with probability $q$. The limits on the performance of different algorithms in terms of the relative difference between $p$ and $q$ have been established (see~\cite{Abbe-15} and references therein).

%

In recent years, the fundamental limits of community detection in SBM (using any method) have been studied~\cite{Decelle-PRE-11}. In particular, an elegant threshold was discovered for exact recovery of labels (also known as strong consistency), which is defined as finding the correct community memberships of all nodes asymptotically almost surely (a.a.s.). In the case of an SBM with $2n$ nodes and two communities of equal sizes, having within-community edge probability $p=a\ln n/n$ and between-communities edge probability $q=b\ln n/n$ (where $a$ and $b$ are constants greater than $1$), exact recovery was shown to be possible if and only if $D(a,b)=(\sqrt{a}-\sqrt{b})^2\ge 1$~\cite{Abbe-IT-16}. We call $D(a,b)$ the Chernoff-Hellinger (CH) divergence, in accordance with~\cite{Abbe-15}. The condition $D(a,b)\ge 1$ can be written equivalently as $\Delta \geq \sqrt{M - 1/4}$ by defining 
\begin{equation}
M \coloneqq (a+b)/2 \text{ and } \Delta \coloneqq (a-b)/2
\end{equation}
so that $2M\ln{n}$ is the average degree of a node and $2\Delta \ln{n}$ is the average difference between the number of neighbors of a node in the same community and the number of neighbors in the opposite community.  
In addition, efficient algorithms were proposed that perform exact recovery wherever the above threshold condition holds~\cite{Abbe-15,Mossel-14}.

This paper is motivated by the following fundamental, and practically important question, to determine \emph{if, and by how much, community detection below the clustering threshold (i.e., $D(a,b)<1$) is possible by querying the labels of a 
vanishingly 
small fraction of carefully selected nodes (i.e., active learning)}. 
%
The above question is essential in the context of recent trends of involving human workforce in data analytic problems (e.g., Amazon Mechanical Turk~\cite{Yan-ICML-11}). 
In such setting, selecting the smallest number of most useful nodes for labeling is important for minimizing human effort.
%
We show that community detection below the threshold is possible by querying the labels of a vanishingly small fraction of nodes (i.e., $o(n)$ number of nodes).
We provide a bound on the number of observations (or samples) required as well as an algorithm which specifies which nodes to sample.


We consider an active learning framework in which an algorithm can choose which nodes to sample, up to a limited budget of $B(n)$ nodes. For simplicity, we restrict the budget to take the form $B(n)=n^s$, for a constant $s$. The main result of this paper is that exact recovery of the community membership is possible if 
{$4\Delta>1/3+\sqrt{4M+1/9}$ and $s\ge {1-\delta(a,b)}$, where $\delta(a,b)$ is as specified in Theorem~\ref{th:main_suf}. 
Furthermore, we provide an efficient sampling algorithm which recovers the community memberships of all nodes a.a.s. wherever $s\ge 1-\delta(a,b)$. Additionally, we also show that recovery is not possible if the number of observed labels is less than $n^{1-D(a,b)}$.}

The proposed algorithm begins by obtaining an initial clustering using an approach presented in~\cite{Mossel-14}, which consists of spectral clustering followed by a replica method. We can characterize the nodes where this initial clustering is more likely to make mistakes.
%
We propose to observe the labels of $B(n)$ nodes selected using this characterization, which allows us to correct the mistakes in the initial clustering. 
We show that observing {$\Omega(n^{1-\delta})$} labels in the proposed way and predicting the labels of the unsampled nodes according to the initial clustering guarantees that we a.a.s. recover all the labels. While most of the techniques used in the proof of our main result are based on previous works~\cite{Mossel-14,Abbe-15}, to the best of our knowledge, the proposed sampling algorithm is novel.

The problem of selecting a subset of nodes for labeling, which are most helpful in predicting the labels of the rest of the nodes is known as active learning on graphs~\cite{Settles-10,Gadde-KDD-14} in the machine learning literature. In a standard active learning setting, the nodes represent data points in a Euclidean space. A $k$-nearest neighbor graph is used to represent the similarities between them. 
Since the number of samples required (also called sampling complexity) by the proposed algorithm for exact recovery is
sub-linear (and thus, vanishingly small) in the number of nodes in the SBM, we suspect a similar approach to be useful in the above setting.
%

\noindent \textbf{Prior work.} The problem of community detection with
random sampling (as opposed to active learning)
in stochastic block models has been previously studied in the context of sparse SBMs, with $p = a/n$ and $q = b/n$~\cite{Zhang-PRE-14}. 
In sparse SBM, only weak recovery is expected, where the goal is to predict labels which have non-trivial correlation with true labels. It was shown in~\cite{Kanade-14,Mossel-TOC-16} that observing labels of a vanishingly small fraction of nodes \emph{randomly} does not affect the weak recovery asymptotically but can lead to efficient local algorithms (in which the label prediction at each node depends only on its local neighborhood), whenever recovery is possible.
The effect of sampling on exact recovery in SBM with log-degree, i.e., with $p = a \ln{n}/n$ and $q = b \ln{n}/n$ has not been studied. 
Our result in this paper demonstrates that active learning can largely impact community detection threshold in the regime of $p=a\ln n/n$ and $q=b\ln n/n$.

\section{Background and Main Result}
We begin with some basic definitions and notation.
\begin{definition}[Stochastic Block Model~\cite{Holland-SN-83}]
A stochastic block model is a random graph $G = (V,E) \sim \Gc(n,p,q)$ with $2n$ nodes constructed as follows: Select a labeling $\sigma\colon V \to \{1,-1\}$ for each node $v\in V$ independently and uniformly at random (i.e., $\Pr(\sigma_v = i) = 1/2, \; i \in \{1,-1\}$). Each pair of nodes $(v,u)$ is connected by an edge with probability $p$ if $\sigma_u = \sigma_v$ and with probability $q$ if $\sigma_u \neq \sigma_v$. 
\end{definition}
Community detection in SBM is equivalent to reconstruction of labels $\sigma$ (up to its sign) using $G$. The correctness of predicted labels is considered in an asymptotic sense, where $p$ and $q$ are allowed to be functions of $n$ and $n \to \infty$. We focus on the logarithmic degree regime, i.e., the case when $p_n = a \frac{\ln{n}}{n}$ and $q_n = b \frac{\ln{n}}{n}$, where $a$ and $b$ are constants. The notion of correctness, which is of most interest in this regime is called strong consistency or exact recovery.
%
%
%
\begin{definition}[Strong Consistency~\cite{Mossel-14}]
Given sequences $p_n$ and $q_n$,  the predicted labeling $\tau = \tau(G)$ is said to be strongly consistent if $\Pr(\tau = \sigma \text{ or } \tau = -\sigma) \to 1$ as $n \to \infty$.
\end{definition}

%
Note that strong consistency requires the number of errors to go to $0$. A weaker notion of correctness is also considered in the literature, called weak consistency (or partial recovery). This notion only requires that the fraction of errors goes to $0$, i.e., the number of errors is $o(n)$.

%
We next define differential degree, which will be useful for characterizing the fundamental limits on clustering and describing the sampling algorithm proposed in the next section. 
\begin{definition}[Differential Degree]
Differential degree of a node $v$ w.r.t. a labeling $\tau$ is defined as 
\begin{equation*}
d^*_{\tau}(v) = |\{u \in V| u \sim v, \tau_u = \tau_v\}| - |\{u \in V| u \sim v, \tau_u \neq \tau_v\}|.
\end{equation*}
$v$ is said to have a majority if $d^*_{\tau}(v) > 0$. Otherwise, $v$ has a minority.
\end{definition}
%
%
Intuitively, any clustering algorithm, which groups the nodes such that there are more edges within communities and fewer edges between communities, can succeed only if each node belongs to the same community as the majority of its neighbors.
This necessary local condition also turns out to be sufficient for recovery.
Specifically, it has been shown~\cite{Mossel-14} that a strongly consistent estimator for $\Gc(n, p_n, q_n)$ exists if and only if a.a.s. $d^*_{\sigma}(v) > 0 \;\; \forall \; v \in V$. This, in turn, holds if and only if $D(a,b)=(\sqrt{a}-\sqrt{b})^2 \ge 1$ or equivalently, $\Delta \geq \sqrt{M - 1/4}$ (hereafter we use $D$ as a shorthand for $D(a,b)$). 
The necessary part of the condition 
can be proven by analyzing the probability that a node $v$ has a minority, which can be computed using the fact that the number of neighbors of each node in the same (or different) community follows a binomial distribution with parameter $p = a \ln{n}/{n}$ (or $q = b \ln{n}/{n}$).

An algorithm to obtain a strongly consistent labeling under the assumption $D\ge 1$ is given in~\cite{Mossel-14}. This algorithm consists of two steps. The first step (spectral clustering followed by replica method) gives a weakly consistent labeling and the final step flips the labels of the minority nodes (with respect to the prediction in the previous step) to ensure that every node is a majority.
The first step of the algorithm is \emph{stable} in the sense that it works even when $D \leq 1$ and only requires $a > b$ to guarantee weak consistency. 
%

The fundamental question considered in this paper is whether one can get a strongly consistent estimator for the labels even when $D \leq 1$, by observing labels of a vanishingly small fraction of carefully selected nodes (in addition to $G$). In order to answer this question, we need to specify the number of samples needed as well as the locations of those samples.
To do this, we first define critical differential degree as follows:
\begin{definition}[Critical Differential Degree] Critical differential degree $\ell_\text{critical}$ is defined as the smallest number such that a.a.s. no node $v$ has $d^*_\sigma(v) \leq -\ell_\text{critical}$, i.e., 
\[\ell_\text{critical} = \inf\{\ell \mid \Pr(d^*_\sigma(v) \leq -\ell) = o(n^{-1})\}.\]
\end{definition}
\noindent Based on this definition, following theorems give the sufficient and necessary conditions on the number of samples needed. 
\begin{theorem}[Sufficient Condition]
\label{th:main_suf}
Sampling the labels of a subset of nodes $S \coloneqq \{v \mid d^*_\sigma(v) \leq \ell_\text{critical}\}$ is sufficient for exact recovery in $\Gc(n, a\frac{\ln{n}}{n}, b\frac{\ln{n}}{n})$. Moreover, 
assuming \hbox{$4\Delta> 1/3 \!+\!\sqrt{4M\!+\!1/9}$}, $|S|\leq  n^s$
if $s \ge 1-\delta(M,\Delta)$, where
\begin{equation}
\delta(M,\Delta) = \frac{\left(4\Delta-1/3-\sqrt{4M+1/9}\right)^2}{2\left(2M+4\Delta-1/3-\sqrt{4M+1/9}\right)}.
\end{equation}
\end{theorem}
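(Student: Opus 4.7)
The plan is to prove the two claims of the theorem separately. For sufficiency I would invoke the full two-stage strategy of~\cite{Mossel-14}---spectral clustering, replica refinement, and a single pass of majority voting---to obtain an ``initial clustering'' $\tau^{(0)}$; the observed labels on $S$ then overwrite $\tau^{(0)}$ on $S$, and the result is returned. The task reduces to showing that $\tau^{(0)}_v=\sigma_v$ for every $v \notin S$ a.a.s. Let $F$ denote the error set after the spectral-plus-replica stage; known results give $|F|=o(n/\ln n)$ a.a.s. The majority-voting step at $v$ outputs $\sigma_v$ whenever $d^*_\sigma(v) > 2|\{u \sim v : u \in F\}|$. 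Since $v \notin S$ gives $d^*_\sigma(v) > \ell_\text{critical}$, it suffices to show $|\{u \sim v : u \in F\}| < \ell_\text{critical}/2$ \emph{uniformly} in $v$. The expected neighborhood overlap is $O(|F|\ln n / n) = o(1)$, so a Chernoff / union bound over the $2n$ vertices delivers the uniform statement a.a.s.

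For the size bound I would estimate $\Pr(v \in S) = \Pr(d^*_\sigma(v) \le \ell_\text{critical})$ by two applications of Bernstein's inequality to the sum $d^*_\sigma(v) = \sum_u W_u$ of independent $\{-1,0,1\}$-valued indicators with mean $2\Delta \ln n$ and variance bounded by $2M \ln n$. Inverting Bernstein at target probability $n^{-1}$ yields the upper bound $\ell_\text{critical}/\ln n \le 1/3 + \sqrt{4M+1/9} - 2\Delta$. Applying Bernstein again at the shifted displacement $t = 2\Delta\ln n - \ell_\text{critical} \ge y\ln n$ with $y := 4\Delta - 1/3 - \sqrt{4M+1/9} > 0$ (exactly the stated hypothesis) then yields
\begin{equation*}
\Pr(v \in S) \;\le\; \exp\!\left(-\frac{y^2 \ln n}{2(2M+y)}\right) \;=\; n^{-\delta(M,\Delta)}.
\end{equation*}
Hence $\EE|S| \le 2\,n^{1-\delta(M,\Delta)}$, and because the indicators $\mathbf{1}[v \in S]$ share at most one random edge pairwise, a second-moment argument upgrades the expectation bound to $|S| \le n^s$ a.a.s.\ whenever $s \ge 1 - \delta(M,\Delta)$.

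The main obstacle is the uniform bound on $|\{u \sim v : u \in F\}|$ in the sufficiency step: because $F$ is a complicated function of the whole graph $G$, a bounded-differences or decoupling argument is needed to exhibit that the $\Theta(\ln n)$ edges incident to any fixed $v$ contribute only a vanishing share of the $\Theta(n\ln n)$ edges consumed by the spectral stage, and hence cannot concentrate the errors near $v$. A secondary algebraic subtlety in the size bound is that two Bernstein forms must be reconciled---the sharp $t^2/(2(\sigma^2+t/3))$ controlling $\ell_\text{critical}$ and the cruder $t^2/(2(\sigma^2+t))$ controlling $\delta(M,\Delta)$---which is what produces the precise constants $1/3$ and $1/9$ appearing in the final formula.
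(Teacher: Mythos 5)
Your bound on $|S|$ is essentially the paper's own argument: two applications of Bernstein's inequality to $d^*_\sigma(v)=\sum_i Z_i$ with mean $2\Delta\ln n$ and variance $\approx 2M\ln n$, the first (with the sharp $t/3$ denominator) fixing $\ell_\text{critical}=(1/3+\sqrt{4M+1/9}-2\Delta)\ln n$, the second at displacement $(4\Delta-1/3-\sqrt{4M+1/9})\ln n$ giving $\Pr(v\in S)\le n^{-\delta(M,\Delta)}$. Your observation about reconciling the two Bernstein forms is exactly where the constants $1/3$ and $1/9$ versus the $2M+y$ denominator come from, and your second-moment upgrade from $\EE|S|$ to an a.a.s.\ bound is in fact more careful than the paper, which stops at $n\cdot\Pr(d^*_\sigma(v)\le\ell_\text{critical})$.

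The sufficiency half, however, has a genuine gap that you flag yourself but do not close: the uniform bound on $|\{u\sim v: u\in F\}|$. The error set $F$ of the spectral-plus-replica stage is a function of the entire graph, so the edges incident to $v$ are not independent of $F$; neither a naive Chernoff/union bound nor a generic bounded-differences argument yields the needed decoupling, and $|F|=o(n/\ln n)$ alone says nothing about \emph{where} the errors sit --- which is the technical heart of the matter. The paper avoids this step entirely by importing two specific results from Mossel et al.: their Proposition~4.3 (the errors of the initial labeling $\tau'$ are a.a.s.\ confined to $V_\epsilon$, the nodes with $d^*_\sigma(v)<\epsilon\sqrt{a}\ln n$ or abnormally high degree) and their Proposition~4.7 ($V_\epsilon$ is a.a.s.\ an independent set). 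Together these give: for $\epsilon$ small enough that $\epsilon\sqrt{a}\ln n<\ell_\text{critical}$, every $v\notin S$ has $d^*_\sigma(v)>\ell_\text{critical}$, hence $v\notin V_\epsilon$, hence $\tau'_v=\sigma_v$ already --- no majority-voting pass and no neighborhood-overlap estimate is required, and sampling $S\supseteq V_\epsilon$ corrects all remaining mistakes. If you want to keep your voting-based route, you still need the replica/holdout structure of the Mossel et al.\ algorithm to localize $F$ inside the independent set $V_\epsilon$; that is precisely the decoupling device your argument is missing.
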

%
The above expression shows that the number of samples needed is smaller as $\Delta$ increases relative to $M$.
\begin{theorem}[Necessary Condition]
\label{th:main_nec}
Let $B(n) = n^s$ be the number of observed labels.
Then, exact recovery is not possible for $\Gc(n, a\frac{\ln{n}}{n}, b\frac{\ln{n}}{n})$ if  
$s\leq 1-D(a,b)$.
\end{theorem}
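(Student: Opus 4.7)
The plan is to exhibit a set of ``intrinsically unrecoverable'' nodes whose size exceeds $n^s$ whenever $s \leq 1-D(a,b)$, and to argue that any such node left unsampled causes exact recovery to fail. First I would define the bad set $\mathcal{B} := \{v \in V : d^*_\sigma(v) \leq 0\}$, i.e., the nodes whose neighborhood lacks a strict majority of same-community neighbors. These are precisely the nodes on which the Bayes-optimal ``genie'' test knowing $\sigma_{V\setminus\{v\}}$ is forced to err (up to ties); this is the same observation that drives the converse direction of the unsampled threshold $D \ge 1$ in \cite{Abbe-IT-16,Mossel-14}.

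Next I would size $\mathcal{B}$. For a fixed $v$, the numbers of same- and different-community neighbors are independent binomials with parameters $(n-1,\, a\ln n/n)$ and $(n,\, b\ln n/n)$; a standard Cram\'er--Chernoff calculation (with the exponent optimized at $e^\lambda = \sqrt{a/b}$) gives $\Pr[v \in \mathcal{B}] = n^{-D(a,b)+o(1)}$, so $\EE|\mathcal{B}| = n^{1-D+o(1)}$. A second-moment/Chebyshev argument -- using that for $u \neq v$ the events $\{u \in \mathcal{B}\}$ and $\{v \in \mathcal{B}\}$ depend on almost disjoint edge sets and are hence asymptotically uncorrelated -- then upgrades this to $|\mathcal{B}| \geq n^{1-D-o(1)}$ with probability tending to one.

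For the converse proper, consider any (possibly adaptive) active-learning scheme producing a sample set $S(G)$ with $|S|\leq n^s$ and a predictor $\hat\sigma$ based on $(G,\sigma_S)$. For each $v \in \mathcal{B}\setminus S$ I would show $\Pr[\hat\sigma_v \neq \sigma_v \mid G,\sigma_S] \geq 1/2 - o(1)$: by information monotonicity, the Bayes risk of any estimator based on $(G,\sigma_S)$ is at least the Bayes risk of the strictly more informative genie estimator given $(G,\sigma_{V\setminus\{v\}})$, which by the very definition of $\mathcal{B}$ places mass $\geq 1/2$ on the wrong label. Combined with the near-independence of errors at well-separated bad nodes (which can be made rigorous along the lines of \cite{Mossel-14} by exploiting that distant neighborhoods share essentially no edges), this yields $\Pr[\text{exact recovery}] \to 0$ as soon as $|\mathcal{B}\setminus S| \to \infty$.

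The main obstacle is the last step: showing $|\mathcal{B}\setminus S| = \omega(1)$ w.h.p.\ throughout the regime $s \leq 1-D$. For strict inequality $s < 1-D$ the bound $|\mathcal{B}\setminus S| \geq |\mathcal{B}| - n^s \to \infty$ is immediate from the concentration in the second paragraph. The borderline case $s = 1-D$ requires arguing that the learner cannot exploit $G$ to target $\mathcal{B}$ with near-perfect precision: the cleanest route is an exchangeability / Le Cam argument showing that, conditionally on $G$, the posterior of $\mathcal{B}$ remains spread over many candidate subsets of size $\Theta(n^{1-D})$ (intuitively, most nodes have statistically indistinguishable local neighborhoods), so any choice of $S(G)$ leaves a positive expected fraction of $\mathcal{B}$ uncovered. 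Making this quantitative at the critical scale, rather than settling for a coarser pigeonhole valid only for $s$ strictly below $1-D$, is where the bulk of the technical work would lie.
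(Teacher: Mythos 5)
Your proposal follows essentially the same route as the paper: identify the minority nodes (those with $d^*_\sigma(v)\le 0$), observe that the genie-aided MAP/ML estimator necessarily misclassifies any such node left unsampled, and count to show that the minority set exceeds the budget $n^s$. The only substantive difference is the device used to control dependence when concentrating $|\mathcal{B}|$: you propose a direct second-moment/Chebyshev bound exploiting that two distinct nodes share at most one potential edge, whereas the paper follows \cite{Abbe-15} and plants a sparse random subset $U$ of $2n/\ln^3 n$ nodes, declaring a node ``ambiguous'' with respect to $V\setminus U$ so that, given the community assignment and $U$, the ambiguity events are exactly independent and a.a.s.\ no two nodes of $U$ are adjacent; both devices work and buy the same conclusion. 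Regarding the boundary case $s=1-D$ that you single out as the main technical obstacle: the paper does not resolve it either --- its proof opens by assuming $s<1-D$ strictly, and the $\ln n$ factor in the minority probability $\Omega(n^{-D}/\ln n)$ indeed makes the plain pigeonhole fail at exactly $s=1-D$ --- so on that point your proposal matches the paper's actual argument, and your concern is better read as a (fair) criticism of the theorem statement than as a gap relative to the paper's proof.
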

%

\begin{remark}
Note that Theorem~\ref{th:main_nec} is congruous with previous results~\cite{Abbe-15,Mossel-14}. If $D \leq 0$ then it is not possible to find even a weakly consistent estimator and the number of samples required for exact recovery is linear in $n$. On the other hand, if $D \geq 1$, then exact recovery is possible without sampling. 
\end{remark}
While exact recovery is not possible without sampling when $D < 1$ (or equivalently $\Delta < \sqrt{M-1/4}$), Figure~\ref{fig:delta_vs_D} shows that the number of samples needed (as given by Theorem~\ref{th:main_suf}) is sub-linear in $n$ provided $\Delta \gtrsim 0.55\sqrt{M-1/4}$ (or $D \gtrsim 0.3$). 
Thus, exact recovery is possible in this regime by observing the labels of a vanishingly small fraction of the nodes. The figure also shows that, for a fixed value of $\Delta / \sqrt{M-1/4}$, the number of samples needed decreases as $M$, which specifies the total number of edges, increases. 
%
\begin{figure}
\centering
\includegraphics[width=0.35\textwidth]{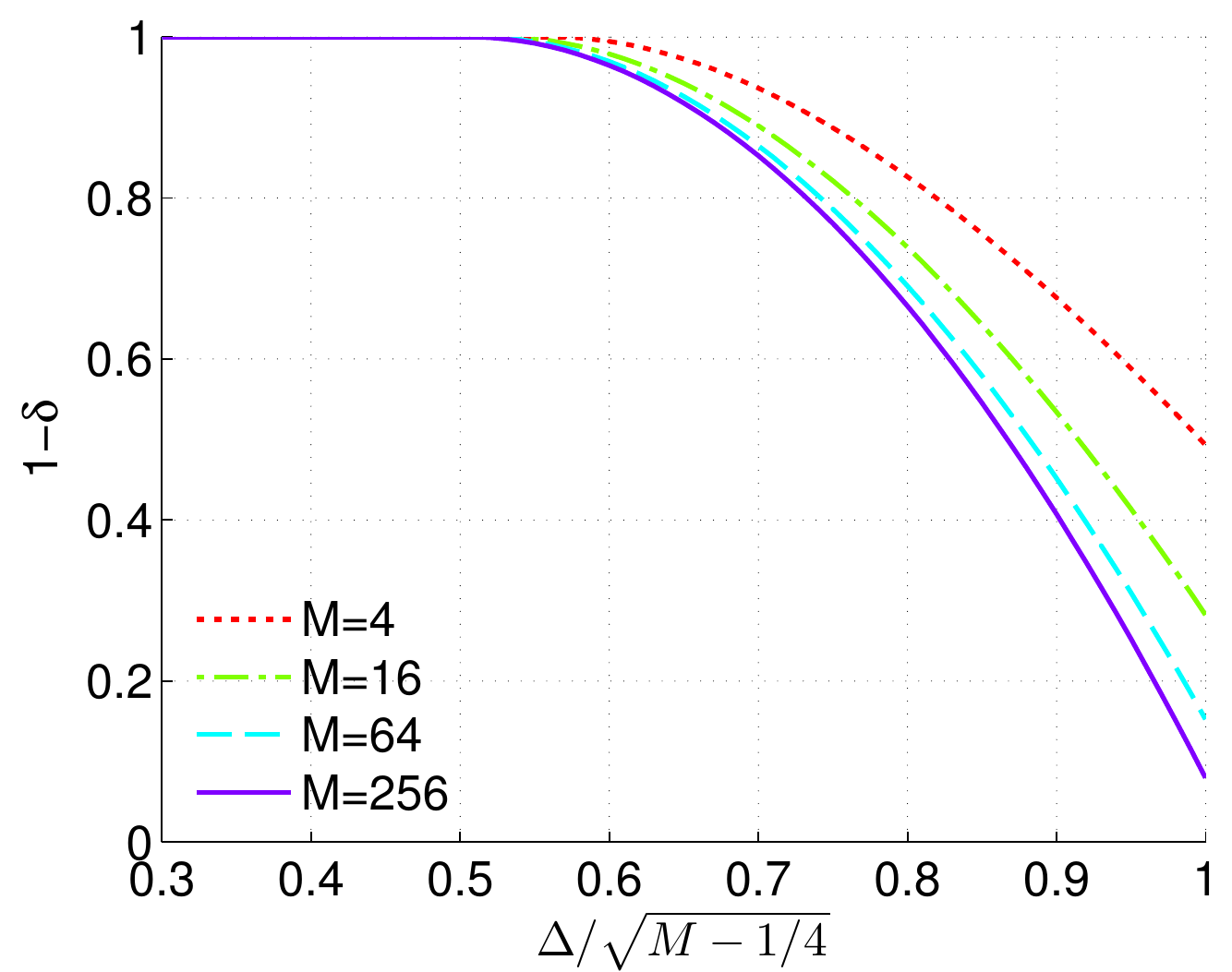}
\caption{$1-\delta$ vs. $\Delta/\sqrt{M - 1/4}$ for different values of $M$.}
\vspace{-1\baselineskip}
\label{fig:delta_vs_D}
\end{figure}

The next section proves the sufficient condition in Theorem~\ref{th:main_suf} by providing an algorithm for selecting the subset of nodes to be sampled, and shows that sampling $n^{1-\delta}$ nodes using the proposed algorithm is enough for exact recovery. The proof of the necessary condition is provided in Section~\ref{sec:nec_cond}.




\section{Sufficient Condition }
\label{sec:suf_cond}
%
In this section, we give a constructive proof for the sufficient condition in Theorem~\ref{th:main_suf} by providing an algorithm for sampling. 
We show that the sampling budget given by Theorem~\ref{th:main_suf} is sufficient for exact recovery using the proposed algorithm. 
%
The main idea behind the proposed algorithm is as follows: even when $D<1$, it is possible to get a weakly consistent estimator $\tau'$ for the labels (using Algorithm~$1$ from \cite{Mossel-14}), which makes only $o(n)$ mistakes as long as $D>0$. If we can characterize the nodes which are wrongly labeled by the weakly consistent estimator and then correct the mistakes by sampling their labels, we will have exact recovery. It can be shown that w.h.p. the mistakes in $\tau'$ are restricted to nodes which have small differential degrees. Thus, the proposed algorithm samples nodes with smallest differential degrees $d^*_{\tau'}(v)$ (see Algorithm~\ref{alg:samp_algo}).      
\begin{algorithm}
\caption{Sampling for exact recovery}
\label{alg:samp_algo}
\begin{algorithmic}[1] 
\REQUIRE Graph $G = (V,E)$, Sampling budget $B(n)$
\STATE Find $\tau'$ (an initial guess for the labels) by clustering the nodes using Algorithm~$1$ from~\cite{Mossel-14}.
\STATE Compute differential degrees $d^*_{\tau'}(v)$ for all $v \in V$.
\STATE Sample $B(n)$ nodes with the smallest $d^*_{\tau'}(v)$.
\STATE Update $\tau'$ using the sampled labels to get $\tau$.
\end{algorithmic}
\end{algorithm}
\begin{theorem}
\label{th:suf}
Let $\tau$ be the labeling given by Algorithm~\ref{alg:samp_algo} after sampling $B(n) = n^s$ nodes.
Then $\tau$ is strongly consistent if $4\Delta>1/3+\sqrt{4M+1/9}$ and \hbox{$s\geq1-\delta(M,\Delta)$}.
\end{theorem}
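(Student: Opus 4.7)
The plan is to show that the $B(n)=n^s$ nodes selected in step~3 of Algorithm~\ref{alg:samp_algo} cover the mistake set $M_\text{init}\coloneqq\{v:\tau'_v\neq\sigma_v\}$ of the initial estimator $\tau'$, so that the label replacement in step~4 yields $\tau=\sigma$ a.a.s. This reduces the claim to a size estimate for the critical set $S_\sigma\coloneqq\{v:d^*_\sigma(v)\le\ell_\text{critical}\}$ promised by Theorem~\ref{th:main_suf}, together with a perturbation estimate that transfers the ordering from the unobservable $d^*_\sigma$ to the computable $d^*_{\tau'}$.

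First I would invoke the stability of Algorithm~1 of~\cite{Mossel-14}: since $4\Delta>1/3+\sqrt{4M+1/9}$ forces $\Delta>0$ and hence $a>b$, that algorithm yields a weakly consistent $\tau'$, and the local majority-flip it concludes with forces the remaining mistakes to lie (up to negligible residue) inside $S_\sigma$, giving $|M_\text{init}|=O(n^{1-\delta})$ a.a.s. Second, I would prove $|S_\sigma|\le n^{1-\delta}$: for $v$ in community $+1$, $d^*_\sigma(v)=X-Y$ with $X\sim\text{Bin}(n-1,a\ln n/n)$ and $Y\sim\text{Bin}(n,b\ln n/n)$, and a Chernoff / moment-generating-function calculation optimized at the critical threshold yields $\Pr(d^*_\sigma(v)\le\ell_\text{critical})\le n^{-\delta(M,\Delta)(1+o(1))}$ with $\delta$ exactly the rate in the theorem statement, after which a first-moment bound closes the step.

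Third, I would glue these together via the edge-by-edge identity
\begin{equation*}
d^*_{\tau'}(v)=\epsilon_v\,d^*_\sigma(v)+r_v,\qquad |r_v|\le 2m(v),
\end{equation*}
obtained by enumerating the four alignments of $(\sigma_v,\sigma_u,\tau'_v,\tau'_u)$ at each edge, where $\epsilon_v=+1$ on correctly-labeled nodes and $-1$ on mistakes, and $m(v)\coloneqq|\{u\sim v:u\in M_\text{init}\}|$. Combined with a uniform estimate $\max_v m(v)=o(\ell_\text{critical})$---obtained from $|M_\text{init}|=O(n^{1-\delta})$ and $\deg v=O(\ln n)$---this forces every mistake $v\in M_\text{init}\setminus S_\sigma$ to have $d^*_{\tau'}(v)\le -\ell_\text{critical}+o(\ell_\text{critical})<0$, and every correct $v\notin S_\sigma$ to have $d^*_{\tau'}(v)\ge \ell_\text{critical}-o(\ell_\text{critical})>0$. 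Since $|M_\text{init}\cup S_\sigma|=O(n^{1-\delta})\le n^s$ under the hypothesis $s\ge 1-\delta$, the $n^s$ smallest-$d^*_{\tau'}$ nodes contain $M_\text{init}$ in full, and step~4 returns $\tau=\sigma$.

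The main obstacle is the uniform control of $m(v)$: because $M_\text{init}$ is produced by a spectral + replica estimator and is correlated with $G$, a standard ``fixed-set, random-neighborhood'' union bound does not directly apply. The cleanest workaround is to soften step~4 to iterative local majority refinement anchored at the sampled labels; this tolerates $o(\ell_\text{critical})$ residual errors per neighborhood, and the argument then only requires that the sampled set cover $S_\sigma$, which follows from the size estimate and the one-sided half of the perturbation inequality, after which Theorem~\ref{th:main_suf} can be invoked as a black box.
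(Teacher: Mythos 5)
Your overall architecture matches the paper's: obtain a weakly consistent $\tau'$, argue that its mistakes have small differential degree, bound the count of low-differential-degree nodes by $n^{1-\delta}$ via a concentration inequality (the paper uses Bernstein's inequality, which is where the specific constants $1/3$ and $\sqrt{4M+1/9}$ in $\delta(M,\Delta)$ come from --- a sharp Chernoff/MGF optimization would give a different exponent, so you should check that your computation actually reproduces the stated rate), and conclude that the $n^s$ smallest-$d^*_{\tau'}$ nodes cover all mistakes. The divergence, and the genuine gap, is in the step you yourself flag: transferring from the unobservable $d^*_\sigma$ to the computable $d^*_{\tau'}$. Your identity $d^*_{\tau'}(v)=\epsilon_v\,d^*_\sigma(v)+r_v$ with $|r_v|\le 2m(v)$ is fine, but the bound $\max_v m(v)=o(\ell_\text{critical})$ does not follow from $|M_\text{init}|=O(n^{1-\delta})$ and $\deg v=O(\ln n)$: nothing a priori prevents a single node from having many mistaken neighbors, and since $M_\text{init}$ is a function of $G$ you cannot union-bound over it as if it were a fixed set. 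Your proposed workaround --- replacing step 4 of Algorithm~\ref{alg:samp_algo} by iterative local majority refinement --- changes the algorithm, so it does not prove the theorem as stated, which concerns the labeling returned by Algorithm~\ref{alg:samp_algo} itself.

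The missing ingredient is Proposition~4.7 of \cite{Mossel-14}, quoted in the paper: for sufficiently small $\epsilon$, a.a.s.\ no two nodes of $V_\epsilon$ are adjacent. Combined with Proposition~4.3 (the mistakes of $\tau'$ are a.a.s.\ confined to $V_\epsilon$), this says every neighbor of a node $v\in V_\epsilon$ is correctly labeled by $\tau'$, hence $m(v)=0$ and $r_v=0$ exactly on the set that matters, giving $|d^*_{\tau'}(v)|=|d^*_\sigma(v)|$ there with no need for uniform control of $m(v)$ over all of $V$. With that structural lemma in hand, your argument closes along essentially the paper's lines. One further small point: $V_\epsilon$ in the paper also includes nodes of degree exceeding $100np$, not only nodes of small differential degree, and your characterization of the mistake set should account for these as well.
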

The proof of Theorem~\ref{th:suf} follows by extending and combining several results and techniques from~\cite{Mossel-14}. We begin by characterizing the set of nodes, where the weakly consistent estimator $\tau$ makes mistakes.
\begin{definition}
For a positive real number $\epsilon$, let $V_{\epsilon}$ be the elements of $V$ that either have a differential degree $d^*_\sigma(v)$ less than $\epsilon\sqrt{a} \ln n$ or have more than $100 np$ neighbors.
\end{definition}

\begin{proposition}[Proposition 4.3 in~\cite{Mossel-14}]
\label{pro:correct}
For any $\epsilon>0$, the initial labeling $\tau'$ in Algorithm~\ref{alg:samp_algo} a.a.s. correctly labels every node in $V\setminus V_{\epsilon}$.
\end{proposition}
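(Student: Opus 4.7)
My plan is to exploit the two-stage nature of Algorithm~1 of~\cite{Mossel-14}: a first (spectral) stage produces a preliminary labeling $\tau_0$ that agrees with $\sigma$ on all but a very sparse set $U$, and a second (``replica'') stage reassigns each node by majority vote over an independent set of its neighbors using $\tau_0$. I will show that, whenever $v\in V\setminus V_\epsilon$, this second-stage majority returns $\sigma_v$, so that $\tau'(v)=\sigma_v$. The first ingredient I would import from~\cite{Mossel-14} is a quantitative weak-consistency bound asserting that the misclassified set $U=\{v:\tau_0(v)\neq\sigma_v\}$ (after the inevitable global sign choice) satisfies $|U|=o(n/\ln^2 n)$ a.a.s.; this is strictly stronger than $|U|=o(n)$ and is what makes the later union bound go through.

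\textbf{Splitting and local voting.} I would then apply the edge-splitting device of~\cite{Mossel-14}: run the first stage on a random subgraph $G_1$ obtained by independently keeping each edge with probability $1-\gamma$, and perform the replica vote on the complementary edges of $G_2$. Conditional on $G_1$ (and hence on $\tau_0$ and $U$), the edges of $G_2$ remain independent, each present with probability $\gamma p$ inside a community and $\gamma q$ across. For a fixed $v$, decompose the $G_2$-replica vote as $d^*_\sigma(v)$ restricted to $G_2$ minus a correction of magnitude at most $2\,|N_{G_2}(v)\cap U|$ coming from neighbors where $\tau_0$ disagrees with $\sigma$. Because $v\notin V_\epsilon$ forces $\deg_G(v)\le 100 np=O(\ln n)$ and $d^*_\sigma(v)\ge \epsilon\sqrt{a}\ln n$, binomial concentration on $G_2$ produces a signal of order $\gamma\epsilon\sqrt{a}\ln n$; a Chernoff estimate using $|U|=o(n/\ln^2 n)$, followed by a union bound over all bounded-degree $v$, gives $\max_v|N_{G_2}(v)\cap U|=o(\sqrt{a}\ln n)$. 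The correction is therefore dominated by the signal and the vote recovers $\sigma_v$.

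\textbf{Main obstacle.} The delicate ingredient is the quantitative weak-consistency bound $|U|=o(n/\ln^2 n)$. A mere $o(n)$ rate would not suffice for the union bound in the replica step, and obtaining the $\ln^{-2} n$ factor requires $\ell_\infty$-type control of the spectral estimator rather than just the usual $\ell_2$ bound. I would follow the eigenvector perturbation / trimming argument of~\cite{Mossel-14} and import it as a black box. The remaining pieces --- the splitting trick to decouple $\tau_0$ from $G_2$, and the binomial tail estimates for both the signal and the spurious overlap $|N_{G_2}(v)\cap U|$ --- are routine once that bound is in hand.
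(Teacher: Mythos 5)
First, note that the paper does not prove this statement at all: it is imported verbatim as Proposition~4.3 of \cite{Mossel-14}, so your reconstruction can only be judged against that external argument. Your architecture is the right one --- a weakly consistent first stage whose output is decoupled, by a splitting device, from the edges used in a local majority-vote correction, followed by a union bound over the nodes of $V\setminus V_\epsilon$ --- and this is indeed how the cited proof proceeds, except that it decouples by randomly partitioning the \emph{vertices} into $k$ replicas and re-running the black-box weakly consistent algorithm on each $G\setminus V_i$, rather than by thinning the \emph{edges}.

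Two points in your sketch need repair. First, your ``main obstacle'' is a misdiagnosis: plain weak consistency suffices. If $|U|\le\delta n$, then conditionally on $G_1$ the overlap $|N_{G_2}(v)\cap U|$ is stochastically dominated by $\mathrm{Bin}(\delta n,\,a\ln n/n)$, and $\Pr\bigl(\mathrm{Bin}(\delta n,a\ln n/n)\ge c\ln n\bigr)\le (e\delta a/c)^{c\ln n}=n^{c\ln(e\delta a/c)}$, which is already $o(n^{-1})$ for a sufficiently small \emph{constant} $\delta$; no $o(n/\ln^2 n)$ rate, and no $\ell_\infty$-type eigenvector control, is needed --- \cite{Mossel-14} only assumes a black box that is correct on all but a small constant fraction of nodes. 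Second, and more seriously, you never check that the noise introduced by the splitting itself is dominated by the signal. Conditioned on $G$, the $G_2$-restricted differential degree has mean $\gamma d^*_\sigma(v)$ but variance of order $(1-\gamma)\gamma\deg_G(v)$, so driving the per-vertex failure probability below $n^{-1}$ via Bernstein forces a deviation allowance of order $\sqrt{(1-\gamma)a}\,\ln n$, which swamps the signal $\gamma\epsilon\sqrt{a}\ln n$ unless $1-\gamma=O(\epsilon^2)$. Hence $\gamma$ is not a free constant: the voting graph must contain all but an $O(\epsilon^2)$ fraction of the edges (equivalently, in the vertex-replica scheme, one needs on the order of $1/\epsilon^2$ parts), which is precisely the content of the paper's footnote that ``in order to make $\epsilon$ smaller, one has to use more partitions in the replica step.'' With that constraint added, and the global sign alignment handled as you indicate, your sketch goes through.
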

\begin{proposition}[Proposition~4.7 in \cite{Mossel-14}]
For sufficiently small $\epsilon$, a.a.s. no two nodes in
$V_\epsilon$ are adjacent.
\end{proposition}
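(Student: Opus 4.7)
The plan is a first-moment argument. Let $X_\epsilon$ denote the number of edges of $G$ having both endpoints in $V_\epsilon$; I will show that $\EE[X_\epsilon]=o(1)$ for $\epsilon$ sufficiently small and conclude by Markov's inequality.

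First I would bound the single-vertex probability $P_\epsilon \defeq \Pr(v\in V_\epsilon)$ for any fixed $v$. The high-degree event $\{|N(v)|>100np\}$ contributes only super-polynomially little, by a standard Chernoff bound on a binomial of mean $\Theta(\ln n)$, so $P_\epsilon$ is dominated by the low-differential-degree event. Writing $d^*_\sigma(v)=X-Y$ with $X\sim\text{Bin}(n-1,p)$ and $Y\sim\text{Bin}(n,q)$, the exponential Markov inequality with the optimal tilt yields $\Pr(d^*_\sigma(v)\le \epsilon\sqrt{a}\ln n)\le n^{-I(\epsilon)}$, where $I(\cdot)$ is a Chernoff–Hellinger rate function that is continuous in $\epsilon$ with $I(0)=D(a,b)$. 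Hence, for any $\eta>0$ one can choose $\epsilon$ small enough that $P_\epsilon\le n^{-I(0)+\eta}$.

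Next I would exploit conditional independence across an edge. Condition on $u\sim v$; the remaining potential edges incident to $u$ and to $v$ respectively involve disjoint subsets of $V\setminus\{u,v\}$, so the counts that determine $d^*_\sigma(u)$ and $d^*_\sigma(v)$ split into independent binomial summands, plus a deterministic $\pm 1$ correction from the edge $uv$ itself. Absorbing that $\pm 1$ shift into a slightly relaxed threshold $\epsilon'>\epsilon$ (still small), one obtains
\begin{equation*}
\Pr(u,v\in V_\epsilon \mid u\sim v)\;\le\;P_{\epsilon'}^{\,2}\,(1+o(1)).
\end{equation*}
Summing over unordered pairs with $\max(p,q)=O(\ln n/n)$ gives
\begin{equation*}
\EE[X_\epsilon]\;\le\;\binom{2n}{2}\max(p,q)\,P_{\epsilon'}^{\,2}\,(1+o(1))\;=\;O\!\bigl(n^{\,1-2I(\epsilon')}\ln n\bigr),
\end{equation*}
which tends to $0$ once $I(\epsilon')>1/2$; Markov's inequality then delivers $\Pr(X_\epsilon\ge 1)\to 0$, which is the claim.

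The main obstacle is the quantitative Chernoff step: one must explicitly optimize the tilt in $\EE[e^{t(Y-X)}]\approx\exp\!\bigl((a\ln n)(e^{-t}-1)+(b\ln n)(e^{t}-1)\bigr)$ against the shifted threshold $\epsilon\sqrt{a}\ln n$ to extract a clean expression for $I(\epsilon)$ and verify that, in the parameter regime considered here, $\epsilon$ can be chosen so that $2I(\epsilon)>1$. Once this estimate on $P_\epsilon$ is in hand, the conditional independence across an edge and the first-moment conclusion are routine.
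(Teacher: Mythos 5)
Your overall strategy---a first moment over edges with both endpoints in $V_\epsilon$, exploiting the fact that, conditioned on $u\sim v$, the remaining edge indicators at $u$ and at $v$ involve disjoint independent variables---is the standard route and is essentially the argument behind Proposition~4.7 of \cite{Mossel-14}; the present paper offers no proof of its own and simply imports the statement. The conditional-independence step and the absorption of the $\pm1$ correction into a relaxed threshold $\epsilon'$ are fine. The problem is precisely the step you defer as the ``main obstacle'': verifying that $\epsilon$ can be chosen with $2I(\epsilon)>1$. Since $I$ is continuous and nonincreasing in $\epsilon$ with $I(0)=D(a,b)$, that condition is available if and only if $D(a,b)>1/2$, and this fails in part of the regime in which the present paper invokes the proposition: the sufficient condition $4\Delta>1/3+\sqrt{4M+1/9}$ admits parameters with $D(a,b)$ close to $1/4$, and Figure~\ref{fig:delta_vs_D} goes down to $D\approx 0.3$. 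For such parameters your bound reads $\EE[X_\epsilon]=n^{1-2I(\epsilon')+o(1)}\ln n\to\infty$, and Markov's inequality yields nothing.

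Moreover, this is not a slack estimate that a sharper Chernoff computation could repair. For $D<1/2$, even the strict minority nodes ($d^*_\sigma(v)\le 0$, contained in $V_\epsilon$ for every $\epsilon>0$) already have an expected number of internal edges of order $n^{1-2D+o(1)}$, which diverges; a routine second-moment computation (the indicator events for vertex-disjoint pairs are asymptotically uncorrelated, and the overlapping-pair terms contribute $n^{1-3D+o(1)}=o(n^{2-4D})$) then shows that adjacent pairs in $V_\epsilon$ exist a.a.s. So the statement itself, not just the first-moment scheme, requires $D(a,b)>1/2$; that hypothesis is automatic in the setting of \cite{Mossel-14} (where $D\ge 1$) but not here. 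To make your proof complete you must either add the explicit hypothesis $D(a,b)>1/2$ (equivalently, exhibit $\epsilon'$ with $2I(\epsilon')>1$) or replace the proposition by a weaker structural statement; as written, the deferred verification is exactly where the argument breaks.
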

The above proposition ensures that a.a.s. $|d^*_{\tau'}(v)| = |d^*_{\sigma}(v)|$ for $v \in V_\epsilon$. 
Our idea is to find $\ell_\text{critical}$ such that 
a.a.s. no node has differential degree (w.r.t. $\sigma$) less than $-\ell_\text{critical}$. 
Since $|d^*_{\tau'}(v)| = |d^*_{\sigma}(v)|$ for all $v\in V_\epsilon$, sampling nodes with $d^*_{\tau'}(v) \leq \ell_\text{critical}$ will capture all the nodes in $V_\epsilon$ with $d^*_{\sigma}(v) \leq \ell_\text{critical}$. 

\begin{proposition}
$\ell_\text{critical} = \left(\sqrt{4M+1/9} + 1/3 - 2\Delta \right)\ln{n}$.
\end{proposition}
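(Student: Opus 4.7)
The plan is to reduce the statement to a large-deviation estimate for a binomial difference and then invoke Bernstein's inequality with careful bookkeeping of the linear (Bennett) term. Conditioning on $\sigma_v = +1$ and on the two community sizes being $n$ each (which holds with probability $\Theta(1/\sqrt{n})$ anyway, and we can easily strip it off), the differential degree splits as
\begin{equation}
d^*_\sigma(v) \;=\; X - Y, \qquad X \sim \mathrm{Bin}(n-1,p), \; Y \sim \mathrm{Bin}(n,q),
\end{equation}
with $X$ and $Y$ independent. A direct computation gives $\mathbb{E}[d^*_\sigma(v)] = (n-1)p - nq = (2\Delta + o(1))\ln n$ and $\mathrm{Var}(d^*_\sigma(v)) = (n-1)p(1-p) + nq(1-q) = (2M + o(1))\ln n$. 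Each of the $2n-1$ underlying Bernoulli summands contributing to $X - Y$ is bounded in $[-1,1]$, so Bernstein's inequality applies.

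Next I would write, for $\ell = c\ln n$ with $c > 0$,
\begin{equation}
\Pr\bigl(d^*_\sigma(v) \leq -c\ln n\bigr) \;=\; \Pr\bigl((X-Y) - \mathbb{E}[X-Y] \leq -(c+2\Delta)\ln n + o(\ln n)\bigr),
\end{equation}
and then plug into the Bernstein bound $\Pr(S - \mathbb{E}S \leq -t) \leq \exp\bigl(-t^2/(2(\mathrm{Var}(S) + t/3))\bigr)$ with $t = (c+2\Delta)\ln n$. This yields
\begin{equation}
\Pr\bigl(d^*_\sigma(v) \leq -c\ln n\bigr) \;\leq\; \exp\!\left(-\frac{(c+2\Delta)^2 \ln n}{2\bigl(2M + (c+2\Delta)/3\bigr)} \,(1+o(1))\right) \;=\; n^{-A(c) + o(1)},
\end{equation}
where $A(c) = (c+2\Delta)^2 / \bigl(2(2M + (c+2\Delta)/3)\bigr)$.

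To turn this into the critical threshold defined by $\Pr(\cdot) = o(n^{-1})$, I set $A(c)=1$. Writing $x = c + 2\Delta$ gives the quadratic $x^2 - \tfrac{2}{3}x - 4M = 0$, whose positive root is $x = \tfrac{1}{3} + \sqrt{4M + \tfrac{1}{9}}$. Hence the threshold $c^* = x - 2\Delta = \sqrt{4M + 1/9} + 1/3 - 2\Delta$, and $\ell_\text{critical} = c^* \ln n$ as claimed. Note $x > 0$ always, and $c^* > 0$ precisely in the regime $4\Delta < 1 + 3\sqrt{4M+1/9}$ where exact recovery without sampling can fail.

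The main obstacle is that the $\ell_\text{critical}$ stated is an equality, not merely an upper bound, so in principle a matching lower bound on $\Pr(d^*_\sigma(v) \leq -c\ln n)$ is needed for $c$ slightly below $c^*$. Bernstein's inequality is known to be asymptotically tight in this mixed Gaussian/Poisson deviation regime (the linear $t/3$ term reflects the correct cubic cumulant of a centered Bernoulli), and a matching lower bound can be obtained via a Chernoff-type local computation using Stirling's formula on the binomial point probabilities, following the same saddle-point structure as in the analysis of $D(a,b)$ in~\cite{Abbe-15,Mossel-14}. For the purposes of Theorem~\ref{th:main_suf}, however, only the upper bound direction is strictly required, since we only need to \emph{contain} all low-differential-degree nodes inside the sampling budget.
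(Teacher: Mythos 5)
Your proposal is correct and follows essentially the same route as the paper: both center the difference of the within- and between-community neighbor counts, apply Bernstein's inequality with deviation $t=(c+2\Delta)\ln n$ and variance $(2M+o(1))\ln n$, and solve the resulting quadratic $x^2-\tfrac{2}{3}x-4M=0$ for $x=c+2\Delta$ to obtain the stated threshold. Your added remark that only the upper-bound direction is needed for Theorem~\ref{th:main_suf} (the paper likewise proves only that direction despite stating an equality) is accurate and slightly more careful than the paper's own treatment.
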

\begin{proof}
Let $X_i\sim \text{Bernoulli}(p={a\ln{n}}/{n})$ and $Y_i\sim\text{Bernoulli}(q={b\ln{n}}/{n})$. Define $Z_i = X_i - Y_i$:
\begin{equation*}
Z_i =
  \begin{cases}
    +1  & \quad \text{with prob. } p(1-q) \\
     0  & \quad \text{with prob. } pq + (1-p)(1-q)\\
    -1	& \quad \text{with prob. } (1-p)q
  \end{cases}
\end{equation*}
%
Note that $d^*_\sigma(v) = \sum_{i=1}^n Z_i$. Therefore,
\begin{align*}
&\Pr(d^*_\sigma(v) \leq -\ell) = \Pr\left(\sum_{i=1}^n Z_i \leq -\ell\right) \\ 
 &= \Pr\left(\sum_{i=1}^n -\left(Z_i-(a-b)\frac{\ln{n}}{n}\right) > \ell+(a-b)\ln{n}\right). \nonumber
\end{align*}
Applying Bernstein inequality to the sum of r.v.'s $\tilde{Z_i} \coloneqq -\left(Z_i-(a-b)\frac{\ln{n}}{n}\right)$, we get $\Pr(d^*_\sigma(v) \leq -\ell) = o(n^{-1})$ if
\begin{equation*}
\ell \geq \ell_\text{critical} = \left(\sqrt{2(a+b)+1/9}+1/3-(a-b)\right)\ln{n}.
\end{equation*}
\end{proof}
\noindent Now, $\Pr(d^*_\sigma(v) \leq \ell_\text{critical})$ can be similarly bounded using Bernstein inequality to get~\eqref{eq:delta} (note that, in order to apply Bernstein inequality, we need to assume $4\Delta>1/3+\sqrt{4M+1/9}$).
\begin{align}
&\Pr(d^*_\sigma(v) \leq \ell_\text{critical}) \leq n^{-\delta}, \label{eq:delta}\\
\text{ with } \delta = &
\frac{\left(4\Delta-1/3-\sqrt{4M+1/9}\right)^2}{2\left(2M+4\Delta-1/3-\sqrt{4M+1/9}\right)}. \nonumber
\end{align}

For sufficiently small $\epsilon$,\footnote{Specifically, we need $\epsilon < \frac{\ell_\text{critical}}{\sqrt{a}\ln{n}}=\frac{\left(\sqrt{2(a+b)+1/9}+1/3-(a-b)\right)}{\sqrt{a}}$. In order to make $\epsilon$ smaller, one has to use more partitions in the replica step (see~\cite{Mossel-14} for details.)}
$d^*_{\sigma}(v) \leq \ell_\text{critical}$ for all $v\in V_\epsilon$. Therefore, sampling the nodes with $d^*_{\tau'}(v) \leq \ell_\text{critical}$ will capture all the nodes in $V_\epsilon$ and thus, correct all the mistakes in the initial labeling $\tau'$.
A bound on the number of samples required is given by $n\cdot\Pr(d^*_\sigma(v) \leq \ell_\text{critical})$.
Thus, sampling $n^{1-\delta}$ nodes with the proposed algorithm is sufficient for exact recovery.

\section{Necessary Condition}
\label{sec:nec_cond}
%
To prove the necessary condition of Theorem~\ref{th:main_nec}, we assume that $s<1-D$, and show that exact recovery is impossible. The proof follows the techniques used in~\cite{Abbe-15} for the necessary condition in clustering.
The proof considers the recovery of minority nodes w.r.t. $\sigma$. A minority node is a node for which the number of neighbors of the same community is smaller than the number of neighbors of the opposite community. The idea of the proof is to show that if there exists a minority node that was not sampled, then no algorithm can estimate the label of the minority node with vanishing error probability. Then we show, using simple counting, that the number of minority nodes is larger than the budget w.h.p., and therefore there must exist at least one unsampled minority node.  

Consider estimation of label $\sigma_v$ of a single node $v$. If the estimator knows the correct labeling of the rest of the nodes in the graph, then its error probability decreases. Therefore we assume that the estimator possesses this knowledge, to obtain a lower bound on the error probability. Next we notice that the estimator with the lowest error probability is the maximum a posteriori (MAP) estimator. Let $X$ be a random variable that represents the number of neighbors of $v$ of community $1$. Similarly let $Y$ be a random variable that represents the number of neighbors of $v$ of community $-1$. Given the observation of $X$ and $Y$, the MAP estimator selects the label 
$$\tau_v^{\star}=\argmax_{\tau_v}\mathbb{P}\{\sigma_v=\tau_v|X=x,Y=y\}.$$ 
Since the labels are a priori equiprobable, the MAP estimator is equal to the Maximum Likelihood (ML) estimator 
$$\tau_v^{\star}=\argmax_{\tau_v}\mathbb{P}\{X=x,Y=y|\sigma_v=\tau_v\}.$$ 
By the definition of the graph, we see that the ML estimator always misclassifies a minority node.

It remains to show that w.h.p. the number of minority nodes exceeds the sampling budget. To do this, we use the fact that the probability that a node is a minority is $\Omega(n^{-D}/\ln(n))$~\cite{Abbe-15,Mossel-14}.
With probability $1-o(1)$, the size of each community is within $\sqrt{n}\ln n$ of $n$. Assume that this holds. We let $U$ be a random set of $2n/\ln^3(n)$ nodes of $G$. With probability $1-o(1)$, the number of nodes in $U$ in each community is within $\sqrt{n}$ of $n/\ln^3 n$, and a randomly selected node in $U$ is adjacent to another node in $U$ with probability $o(1)$.

We say that a node in $U$ is ambiguous if it has a minority with respect to the nodes that are not $U$ (i.e., not considering neighbors in $U$). 
The number of neighbors of a node in $V \setminus U$ in each community follow a multivariate Poisson distribution approximately.
Furthermore, it is shown in~\cite{Abbe-15} that the probability that a node is ambiguous is
$$\Omega(n^{-D}/\ln(n)).$$
By the assumption that $s<1-D$, it follows that there exists $\epsilon>0$ such that a node in $U$ is ambiguous with probability $\Omega(n^{\epsilon+s-1})$. For a fixed community assignment and choice of $U$, there is no dependence between whether or not any two nodes are ambiguous. Also, an ambiguous node is not adjacent to any other node in $U$ with probability $1-o(1)$. So, for any choice of $n^s$ nodes to sample, with probability $1-o(1)$ there is at least one unsampled ambiguous node, which is misclassified by an optimal classifier. Therefore, any classifier misclassifies at least one node with probability $1-o(1)$. 

\section{Experiments}
We generate random graphs using SBM with different values of $D$. $b$ is set to $2$ and $D$ is varied from $0.3$ to $0.6$. For each value of $D$, graphs with number of nodes, $n$,  changing from $1000$ to $2000$ are generated. For each $D$ and $n$, we compute the following quantities (averaged over $30$ trials):
\begin{align}
n_m &= \text{ number of minorities} \nonumber \\
n_s &= \text{ number of nodes with $d^*_{\tau'}(v) \leq d^*_{\tau',\mathrm{err},\max}$} 
\label{eq:quants}
\end{align}
$d^*_{\tau',\mathrm{err},\max}$ denotes the maximum differential degree w.r.t. the initial label predictions $\tau'$ among the nodes, where $\tau'_v \neq \sigma_v$. Note that sampling $n_s$ nodes with the proposed algorithm will capture all the errors in the initial labeling $\tau'$. Number of minorities $n_m$ can be thought of as a necessary lower bound on the number of samples required according to Theorem~\ref{th:main_nec}.
\begin{figure}
\centering
\includegraphics[width=0.34\textwidth]{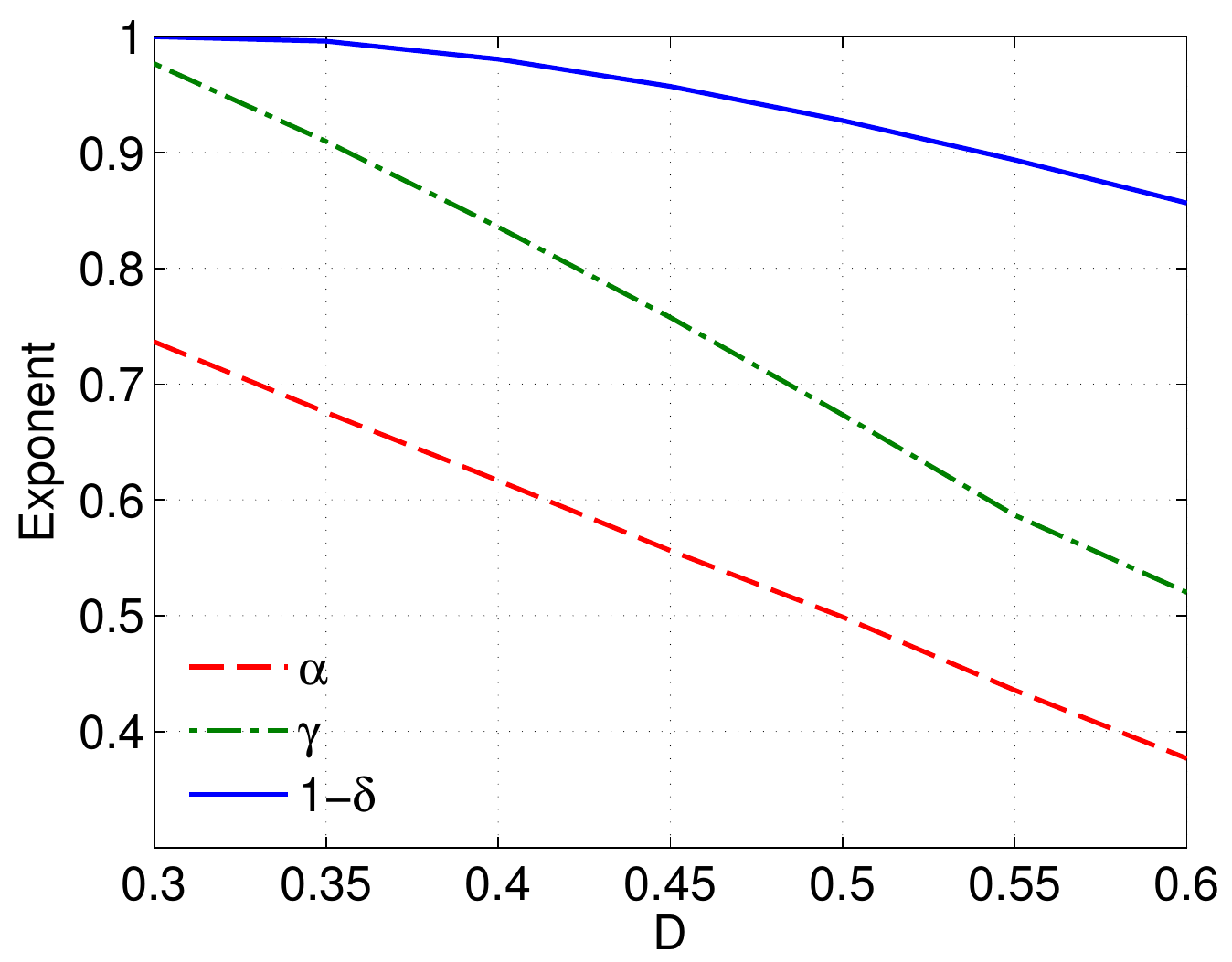}
\caption{Exponents in \eqref{eq:quants_fit} as function of $D$ and the theoretical bound on $\gamma$ given by Theorem~\ref{th:main_suf}.}
\vspace{-1\baselineskip}
\label{fig:exponent_vs_D}
\end{figure}
For each value of $D$, we express the quantities in \eqref{eq:quants} as:
\begin{align}
n_m &\approx c_1 n^\alpha \nonumber \\
n_s &\approx c_2 n^\gamma.
\label{eq:quants_fit}
\end{align}
The exponents $\alpha$ and $\gamma$ are computed by fitting a linear function of $\log{n}$ to the observed values of $\log n_m$ and $\log n_s$ for each value of $D$. (Note that the linear regression needs to be done jointly for different values $D$ in order to keep $c_1$ and $c_2$ fixed as $D$ changes. If linear regression produces $\gamma > 1$, then it is set to $1$, since it implies sampling all nodes.).
Figure~\ref{fig:exponent_vs_D} plots the exponents as a function of $D$ along with the theoretical bound $1-\delta$ given by Theorem~\ref{th:main_suf}. It shows that the exponent in the expression for $n_m$, $\alpha \approx 1-D$, which is consistent with the theoretical value given in \cite{Mossel-14} and thus, acts as a sanity check for the results. Figure~\ref{fig:exponent_vs_D} also verifies that the exponent $\gamma$ in the expression for $n_s$ is indeed less than the sufficient theoretical bound given by Theorem~\ref{th:main_suf}. The experimentally obtained values of $\gamma$ seem to be much lower than the theoretical bound $1-\delta$, suggesting that a potentially tighter bound for $n_s$ can be proven.

We also plot the error rate vs. the fraction of nodes sampled in Figure~\ref{fig:err_vs_num_samp}, for graphs of sizes $n = 1000, 2000$ and $4000$ keeping $D$ fixed at $0.5$. The figure shows that the a smaller fraction of the nodes needs to sampled to achieve the same error as the graph size increases. The fraction of nodes needed to achieve zero error is well below the theoretical bound 
and decreases as $n$ increases. This is expected since the number of samples needed for exact recovery is a sub-linear function of $n$ according to Theorem~\ref{th:main_suf}.
%
\begin{figure}
\centering
\includegraphics[width=0.34\textwidth]{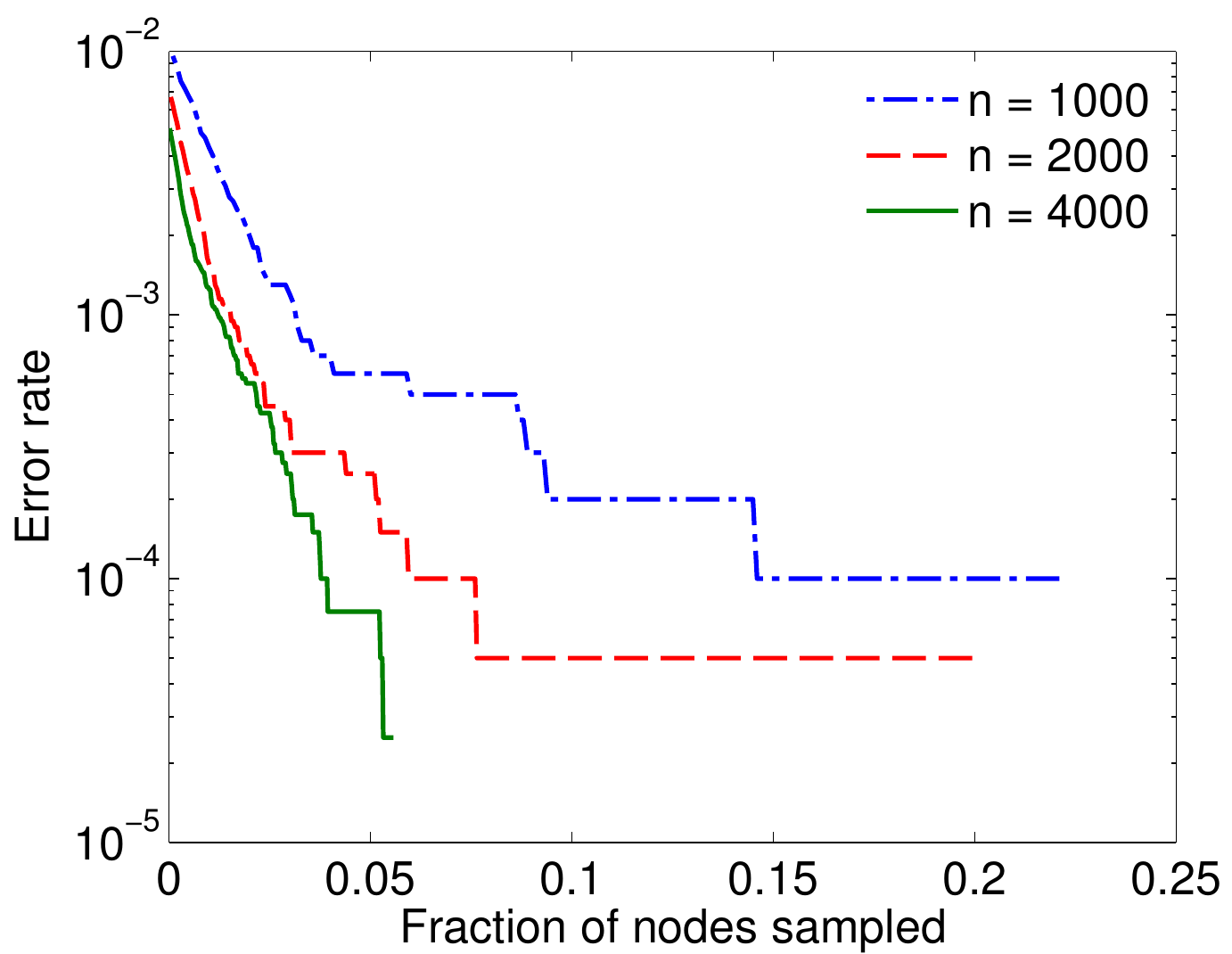}
\caption{Error rate vs. fraction of nodes sampled at $D = 0.5$ for different graph sizes $n$. The theoretical bound on the fraction of samples needed is $0.56$, $0.54$ and $0.51$ for $n = 1000, 2000$ and $4000$ respectively.}
\vspace{-1\baselineskip}
\label{fig:err_vs_num_samp}
\end{figure}

\section{Conclusion}
In this paper, we considered the problem of active learning for community detection in SBM. Community detection, using only the graph, is possible if and only if the relative difference between the within-community and between-community connection probabilities is larger than certain threshold. We showed that community detection is possible below this threshold by observing the labels of a 
small fraction of carefully selected nodes in addition to the graph. We gave the necessary and sufficient conditions on the number of samples required and also provided an algorithm that specifies which nodes should be sampled. 
We also verified our results through numerical experiments. 
%
In future, we would like to generalize our result to an SBM with more than two and possibly overlapping communities. Another interesting extension is to consider community detection and active learning in random graph models with geometry.


\bibliographystyle{IEEEtran}
\bibliography{refs}
\end{document}